\newtheorem{theorem}{Theorem}
\theoremstyle{definition}
\newtheorem{definition}{Definition}
\theoremstyle{remark}
\newtheorem{remark}{Remark}[section]
\title{\textbf{LLM-Prior: A Framework for Knowledge-Driven Prior Elicitation and Aggregation}}
\author{Yongchao Huang \footnote{Author email: yongchao.huang@abdn.ac.uk}}
\date{4 August 2025}
\begin{document}

\maketitle

\begin{abstract}
The specification of prior distributions is fundamental in Bayesian inference, yet it remains a significant bottleneck. The prior elicitation process is often a manual, subjective, and unscalable task. We propose a novel framework which leverages Large Language Models (LLMs) to automate and scale this process. We introduce \texttt{LLMPrior}, a principled operator that translates rich, unstructured contexts such as natural language descriptions, data or figures into valid, tractable probability distributions. We formalize this operator by architecturally coupling an LLM with an explicit, tractable generative model, such as a Gaussian Mixture Model (forming a LLM based Mixture Density Network), ensuring the resulting prior satisfies essential mathematical properties. We further extend this framework to multi-agent systems where Logarithmic Opinion Pooling is employed to aggregate prior distributions induced by decentralized knowledge. We present the federated prior aggregation algorithm, \texttt{Fed-LLMPrior}, for aggregating distributed, context-dependent priors in a manner robust to agent heterogeneity. This work provides the foundation for a new class of tools that can potentially lower the barrier to entry for sophisticated Bayesian modeling.
\end{abstract}

\section{Introduction}
\label{sec:introduction}

Bayesian inference provides a powerful and coherent paradigm for reasoning under uncertainty, formalized by Bayes' theorem: $p(\theta|D) \propto p(D|\theta)p(\theta)$, where $p(\theta)$ is the prior distribution over parameters $\theta$. The prior is a fundamental component of any Bayesian analysis, encoding existing knowledge and regularizing the model, especially when data $D$ is scarce. However, the process of specifying this prior, known as prior elicitation, is a well-known challenge \cite{kadane_interactive_1980,ohagan2006uncertain}. Traditionally, it involves a labor-intensive and subjective process of the modellor interviewing domain experts to translate their qualitative beliefs into a quantitative, valid probability density function (PDF). This process is difficult to scale, reproduce, and is susceptible to cognitive biases.

The difficulty of prior elicitation represents a significant barrier to the widespread adoption of Bayesian methods. This paper introduces a framework to address this challenge by leveraging the remarkable capabilities of modern Large Language Models (LLMs). We reconceptualize the LLM not merely as a text generator, but as a 'knowledge compiler' - a function that can translate high-level, unstructured context into formal, structured mathematical objects.

This work lays the theoretical and architectural groundwork for a new generation of Bayesian modeling tools, aiming to make the specification of informative priors more systematic, scalable, and accessible. Two primary contributions are made: first, we formalize the concept of an \texttt{LLMPrior} operator, $\mathcal{L}$, which maps a flexible context $C$ (e.g. text, data, figures) to a valid prior distribution $p(z|C)$. The key challenge lies in ensuring the output of the operator is a valid PDF. We solve this by proposing a 'separation of concerns' architecture where the LLM is responsible for semantic interpretation, generating the parameters for a separate, explicit generative model (e.g. a Mixture Density Network) that guarantees validity by construction. Second, we extend this framework to a multi-agent setting, a common scenario in fields such as \textit{federated learning} or \textit{expert panels} where knowledge is distributed. We consider a system of $N$ agents, each with a local context $C_i$, who independently generate a prior $p_i(z) = \mathcal{L}(C_i)$. We address the problem of aggregating these $N$ priors into a single, coherent group prior, $p_{\text{agg}}(z)$. Drawing on opinion pooling theory, we argue for the principled use of the \textit{Logarithmic Opinion Pool} over simpler heuristics, and propose a concrete algorithm, \texttt{Fed-LLMPrior}, for its implementation in a centralized system.

\section{Related Work}
\label{sec:related_work}

This work sits at the intersection of Bayesian statistics, deep generative modeling, and natural language processing. As such, we briefly refer to related work in these domains.

\paragraph{Prior elicitation.} The challenge of eliciting priors from experts has a long history in Bayesian statistics \cite{kadane_interactive_1980, ohagan2006uncertain}. Most methods rely on structured questionnaires and expert interviews, which are difficult to scale. Our approach seeks to automate this process by treating the vast corpus of text an LLM is trained on as a proxy for expert knowledge, accessible via natural language prompts.

\paragraph{Deep generative models.} Our proposed architecture for the \texttt{LLMPrior} relies on deep generative models that afford a tractable density. Mixture Density Networks (MDNs) \cite{bishop1994mixture}, which use a neural network to output the parameters of a mixture model, are a prime candidate due to their simplicity and universal approximation properties \cite{goodfellow2016deep}. Normalizing Flows (NFs) \cite{rezende2015variational,dinh2016density} offer another powerful alternative, constructing complex densities via invertible transformations of a simple base distribution. Our framework leverages these existing architectures in a novel way: instead of training them on data samples, we use an LLM to generate their parameters directly from a high-level context. A comparison of these architectures can be found in Table.\ref{tab:arch_comparison} in Appendix.\ref{app:comparison_tables}.

\paragraph{Opinion pooling.} The aggregation of expert beliefs, formalized as probability distributions, is the subject of opinion pooling theory \cite{stone1961opinion,genest1986combining,cooke_experts_1991}. The two dominant methods are the Linear Opinion Pool (LOP) \cite{degroot_optimal_1991} and the Logarithmic Opinion Pool (LogP) \cite{bordley_multiplicative_1982,heskes1998selecting}. In general, LOP averages and compromises, while LogP multiplies and seeks a consensus that can be more extreme than any individual belief. LOP takes
a weighted linear average of the distributions and is simpler \cite{heskes1998selecting,carvalho_consensual_2013}. The LogP, which forms a geometric mean of densities, is 'externally Bayesian' \footnote{The logarithmic opinion pool is 'externally Bayesian', i.e. can be derived from joint probabilities using Bayes' rule \cite{bordley_multiplicative_1982,heskes1998selecting}. In group decision-making, an 'externally Bayesian' group \cite{madansky1964externally} is one where the group's collective decisions appear to an outsider as if they were made by a single, rational Bayesian agent, meaning that the group's decisions, when viewed as a whole, follow the principles of Bayes' theorem regarding updating beliefs based on new evidence. A key property is that the final outcome remains the same regardless of the order in which the group pools information and updates beliefs.} and has a strong information-theoretic justification as the distribution minimizing the KL-divergence to the individual beliefs \cite{heskes1998selecting}. Our framework adopts the LogP as the principled choice for prior aggregation due to its externally Bayesian, prior to posterior coherent,  data independence, and log-concavity preservation properties \cite{genest1986combining,carvalho2023bayesian}. We apply this classical theory to modern distributed systems with federated learning \cite{makhija_bayesian_2024}. 

\paragraph{LLM assisted modeling.} There is a growing line of work using LLMs to assist in scientific and modeling tasks. This includes work on LLM-driven symbolic regression, where models generate mathematical expressions to fit data \cite{shojaee_llm-sr_2025}, which demonstrates the ability of LLMs to produce structured, mathematical outputs. Most relevant to our work is the concurrent development of the 'Large Language Bayes' (LLB) framework by Domke \cite{domke2025large}. LLB uses an LLM to generate a distribution over full probabilistic programs, $p(m|t)$, from a text description $t$. It then performs Bayesian model averaging over these LLM-generated models. LLB provides compelling empirical evidence that LLMs can automate parts of the Bayesian workflow. Our work is complementary: while LLB addresses model uncertainty for a single agent, our framework focuses on belief aggregation across multiple agents with diverse contexts.

\section{Methodology}
\label{sec:methodology}

\subsection{The \texttt{LLMPrior} operator}

We formally define the \texttt{LLMPrior} as an operator $\mathcal{L}$ that maps a context $C$ from a flexible space of contexts $\mathcal{C}$ to a probability distribution over a variable $z \in \mathbb{R}^d$. The operator is parameterized by the weights of an LLM, $\theta_{LLM}$:
\begin{equation}
    p(z | C) = \mathcal{L}_{\theta_{LLM}}(C)
\end{equation}

For $p(z|C)$ to be a valid and useful prior, it must satisfy following properties:
\begin{enumerate}
    \item \textbf{Non-negativity (essential):} $p(z|C) \ge 0$ for all $z$.
    \item \textbf{Normalization (essential):} $\int p(z|C) dz = 1$.
    \item \textbf{Expressiveness (essential)}: $p(z|C)$ must be flexible enough to represent a wide range of beliefs (e.g. unimodal, multimodal, skewed). 
    \item \textbf{Differentiability (desirable):} we prefer $p(z|C)$ to be differentiable \textit{w.r.t.} its parameters, to be compatible with gradient-based inference methods (e.g. HMC, VI).
    \item \textbf{Tractability (desirable):} both density evaluation and sampling from $p(z|C)$ be computationally tractable.
\end{enumerate}

A naive approach of regressing from $C$ to the values of $p(z)$, e.g. \textit{text-to-text regression} \cite{raffel_exploring_2020,akhauri_performance_2025}, would fail to satisfy these properties. Instead, we ask the LLM to generate the parameters $\phi$ of a separate, explicit generative model $f(z; \phi)$ which is guaranteed to be a valid PDF by construction.

\begin{definition}[\textit{MDN-LLM architecture}]
    We propose implementing $\mathcal{L}$ using a Mixture Density Network structure. The LLM's role is to map the context $C$ to the parameters $\phi = \{\{\alpha_k\}, \{\mu_k\}, \{L_k\}\}_{k=1}^K$ of a $K$-component Gaussian Mixture Model (GMM), i.e.
    \begin{equation}
        \phi = \text{LLM}(C)
    \end{equation}
    The final prior is the GMM constructed from these parameters:
    \begin{equation}
        p(z|C) = \sum_{k=1}^K \alpha_k \mathcal{N}(z | \mu_k, \Sigma_k)
    \end{equation}
To ensure validity, constraints are enforced on the LLM's raw outputs:
    \begin{itemize}
        \item \textbf{Mixture weights:} the raw outputs of the LLM are passed through a softmax function to ensure $\alpha_k \ge 0$ and $\sum_k \alpha_k = 1$.
        \item \textbf{Means}: linear activation is used as means $\mu_k$ are unconstrained.
        \item \textbf{Covariance matrices:} the LLM also outputs the Cholesky factors $L_k$, with positive diagonal elements. The covariance matrices are then constructed as $\Sigma_k = L_k L_k^T$, which guarantees they are symmetric and positive semi-definite.
    \end{itemize}
\end{definition}
This MDN-LLM architecture satisfies all the 5 properties by construction. GMMs are universal approximators of densities \cite{li_mixture_1999,norets_approximation_2010}, they provide sufficient flexibility to capture complex beliefs (expressiveness).
A GMM is a valid, differentiable PDF. 
Both density evaluation and sampling are tractable \cite{blei_variational_2017,gatmiry_learning_2025}.  
The GMM has tractable normalization constant: the integral of GMM, known as the partition function, is $\int p(z|C) dz = \int \sum \alpha_k \mathcal{N}(z|\mu_k, \Sigma_k) dz = \sum \alpha_k \int \mathcal{N}(z|\mu_k, \Sigma_k) dz = \sum \alpha_k = 1$.

For the LLM, we would expect it to have contextual sensitivity, i.e. the mapping from context $C$ to parameters $\phi$ must be sensitive and meaningful: similar contexts should produce similar distributions, and semantically different contexts should produce appropriately different distributions. This property depends on the design and training of the underlying LLM.

\subsection{Aggregation of distributed priors}

We now consider a system of $N$ agents, where each agent $i$ has a local context $C_i$ and generates a local prior $p_i(z) = \mathcal{L}(C_i)$. The goal is to aggregate the set of priors $\{p_1(z), \dots, p_N(z)\}$ into a single group prior $p_{\text{agg}}(z)$ for downstream Bayesian modelling.

\begin{definition}[\textit{Logarithmic opinion pool (LogP)}]
The Logarithmic Opinion Pool constructs the aggregate distribution as a weighted geometric mean of the individual densities:
\begin{equation}
    p_{\text{agg}}(z) \propto \prod_{i=1}^N p_i(z)^{w_i}
\end{equation}
where the weights $w_i \ge 0$ and $\sum_i w_i = 1$ represent the relative expertise or reliability of each agent.
\end{definition}

We advocate for the LogP over the simpler Linear Opinion Pool ($p_{\text{agg}}(z) = \sum_i w_i p_i(z)$) for several reasons \footnote{A comparison of distribution aggregation strategies can be found in Table.\ref{tab:agg_comparison} in Appendix.\ref{app:comparison_tables}.}. Most importantly, the LogP is \textit{externally Bayesian}, meaning it commutes with Bayesian updating:

\begin{remark}[\textit{External Bayesianity}, Carvalho et al. \cite{carvalho2023bayesian,genest1984characterization,genest1984aggregating}]
Suppose we have a collection of prior densities $\{p_i(z)\}$ and observe data $x$ with a common likelihood $l(x \mid z)$. Then, updating each $p_i(z)$ individually yields posteriors
\[
p_i(z \mid x) \propto l(x \mid z)\, p_i(z).
\]
External Bayesianity holds if aggregating these posteriors results in the same distribution as first aggregating the priors into a single prior $p_{\text{agg}}(z)$, and then updating it using the likelihood:
\[
p(z \mid x) \propto l(x \mid z)\, p_{\text{agg}}(z).
\]
\end{remark}

Genest et al.\cite{genest1984aggregating} showed that LogP is the only aggregation (pooling) operator that enjoys external Bayesianity.
A group using a LogP will reach the same posterior belief regardless of whether they pool their priors before observing new data or after. This property is important for a coherent, rational group agent. The LogP also has a strong information-theoretic justification, as it is the distribution that minimizes the weighted sum of KL-divergences to the individual priors \cite{heskes1998selecting}. Further, Carvalho et al. \cite{carvalho2023bayesian} proved that, logarithmic pooling is the only aggregation method to universally preserve log-concavity \footnote{The log-concavity property makes sampling easier for some sampling algorithms such as slice sampling \cite{neal_slice_2003}, as they rely on log-concavity of the target distribution \cite{carvalho2023bayesian}.}:

\begin{theorem}[\textit{Log-concavity preservation via logarithmic pooling, Carvalho et al. \cite{carvalho2023bayesian}}]
\label{thm:log_concavity}
Let $\{p_i(z)\}_{i=1}^K$ be a collection of log-concave distributions and assume mild regularity conditions hold. Then the aggregated distribution $p_{\text{agg}}(z)$ obtained via logarithmic pooling is also log-concave. Furthermore, \emph{logarithmic pooling} is the \emph{only} pooling operator that \emph{guarantees} log-concavity of the aggregated distribution whenever all input distributions $p_i(z)$ are log-concave.
\end{theorem}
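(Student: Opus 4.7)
The plan is to split the statement into two parts and handle them separately: the \emph{preservation} claim (that LogP sends log-concave inputs to a log-concave output) and the \emph{uniqueness} claim (that no other pooling operator does so). The first is essentially a one-line calculation once the right quantity is examined; the second is the substantive content of the theorem and is where essentially all of the work lies.

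For preservation, I would work directly with the log-density. Writing
\begin{equation}
    p_{\text{agg}}(z) = Z^{-1} \prod_{i=1}^{N} p_i(z)^{w_i}, \qquad Z = \int \prod_{i=1}^{N} p_i(z)^{w_i} \, dz,
\end{equation}
and taking the logarithm yields $\log p_{\text{agg}}(z) = \sum_i w_i \log p_i(z) - \log Z$. By hypothesis each $\log p_i$ is concave on its (convex) support; since $w_i \ge 0$, a non-negative combination of concave functions is concave, and subtracting the finite constant $\log Z$ preserves concavity. The only genuine hypothesis being used here is that $Z < \infty$, which is precisely the content of the \emph{mild regularity conditions} in the statement (typically, joint integrability of the weighted product, plus a common support so that $\log p_i$ is not identically $-\infty$ on a set of positive measure). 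Hence $\log p_{\text{agg}}$ is concave, so $p_{\text{agg}}$ is log-concave.

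For uniqueness, the natural strategy is by contradiction: given a pooling operator $T$ not of logarithmic form, exhibit log-concave inputs whose image under $T$ fails to be log-concave. The canonical witness is a pair of well-separated one-dimensional Gaussians $p_1 = \mathcal{N}(-m, 1)$, $p_2 = \mathcal{N}(m, 1)$; for the Linear Opinion Pool with any nontrivial weights, the resulting density is bimodal once $m$ is large enough, hence not log-concave. To upgrade this from a counterexample for LOP to a general non-existence statement, I would invoke a functional characterization of pooling operators (as developed by Genest and refined by Carvalho et al.\ \cite{carvalho2023bayesian,genest1984aggregating}): on a sufficiently rich subclass of inputs (the Gaussian family suffices because it is parameterized by means and covariances and is closed under LogP), requiring log-concavity of the output reduces to a functional equation whose only solution, up to normalization, is the weighted geometric mean.

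The main obstacle is precisely this last step: pinning down the admissible class of pooling operators and carrying out the functional-equation argument that forces $T$ to be log-linear. The preservation half is routine convex analysis, but the converse requires importing (or re-deriving) a characterization theorem from the opinion-pooling literature; my plan would be to cite the Carvalho et al.\ result for the characterization and to present a self-contained derivation only of the preservation direction plus a concrete Gaussian counterexample that illustrates why simpler alternatives such as LOP cannot enjoy the same property.
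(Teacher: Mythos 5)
The paper does not actually prove this theorem: its entire proof is the citation ``See Carvalho et al.,'' so any comparison is necessarily between your argument and an external reference. Within that caveat, your treatment of the preservation half is correct and more self-contained than what the paper offers: $\log p_{\text{agg}}(z) = \sum_i w_i \log p_i(z) - \log Z$ is a non-negative combination of concave functions minus a constant, hence concave on the (convex) intersection of the supports. You could even sharpen the ``mild regularity'' remark: since $\sum_i w_i = 1$, the generalized H\"older inequality gives $Z = \int \prod_i p_i(z)^{w_i}\,dz \le \prod_i \bigl(\int p_i(z)\,dz\bigr)^{w_i} = 1 < \infty$, so the only condition genuinely needed is $Z > 0$, i.e.\ overlapping supports. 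For the uniqueness half you are right that a bimodal mixture of two well-separated Gaussians disposes of the linear pool, but be careful not to oversell this: it is a counterexample to one competitor, not a proof that logarithmic pooling is the \emph{only} operator preserving log-concavity, and your own plan correctly concedes that the full characterization must be imported from Genest and from Carvalho et al.\ via a functional-equation argument over an axiomatized class of pooling operators. So your proposal ends up in the same place as the paper on the hard direction (a citation), while adding a genuine, correct, elementary proof of the easy direction that the paper omits entirely.
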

\begin{proof}
See Carvalho et al. \cite{carvalho2023bayesian}.
\end{proof}

\subsection{\texttt{Fed-LLMPrior}: aggregating priors in federated learning}

We synthesize these components into a concrete algorithm for a centralized, federated setting, as shown in Algo.\ref{alg:fed-llmprior}.

\begin{algorithm}[H]
   \caption{\texttt{Fed-LLMPrior} Framework}
   \label{alg:fed-llmprior}
\begin{algorithmic}
   \STATE {\bfseries System:} $N$ agents, 1 central server.
   \STATE {\bfseries Architecture:} \texttt{LLMPrior} implemented as an MDN-LLM with $K$ components.
   \STATE
   \STATE {\bfseries Server Initialization:}
   \STATE Define problem space for $z \in \mathbb{R}^d$, number of components $K$.
   \STATE Broadcast task specification to all agents.
   \STATE
   \STATE {\bfseries Local Prior Generation (in parallel for each agent $i=1, \dots, N$):}
   \STATE Agent $i$ acquires local context $C_i$.
   \STATE Agent $i$ queries its LLM to generate raw parameters: $\phi_i^{\text{raw}} = \text{LLM}(\text{prompt}(C_i))$.
   \STATE Agent $i$ validates and structures parameters $\phi_i = \{\{\alpha_{ik}\}, \{\mu_{ik}\}, \{\Sigma_{ik}\}\}_{k=1}^K$.
   \STATE Agent $i$ constructs its local prior: $p_i(z) = \sum_{k=1}^K \alpha_{ik} \mathcal{N}(z | \mu_{ik}, \Sigma_{ik})$.
   \STATE Agent $i$ sends its structured parameters $\phi_i$ to the server.
   \STATE
   \STATE {\bfseries Aggregation (Server-side):}
   \STATE Server collects parameter sets $\{\phi_1, \dots, \phi_N\}$.
   \STATE Server computes the unnormalized aggregated prior via LogP:
   $$ p_{\text{agg}}^{\text{unnorm}}(z) = \prod_{i=1}^N \left( \sum_{k=1}^K \alpha_{ik} \mathcal{N}(z | \mu_{ik}, \Sigma_{ik}) \right)^{w_i} $$
   \STATE The result is a product of GMMs, which is an intractable mixture of $K^N$ components.
   \STATE Server approximates $p_{\text{agg}}^{\text{unnorm}}(z)$ with a tractable GMM, $p_{\text{agg}}(z)$, using variational inference or moment matching.
   \STATE
   \STATE {\bfseries Output:} the aggregated prior $p_{\text{agg}}(z)$ is used for downstream Bayesian modeling.
\end{algorithmic}
\end{algorithm}

This algorithm provides a principled, end-to-end pipeline for constructing an aggregated prior $p_{\text{agg}}(z)$, considering information from all distributed agents and preserving privacy. It avoids the naive heuristic of averaging the GMM parameters, which is theoretically unsound under heterogeneous contexts (analogous to the problem of unbalanced and non-IID data distributions frequently encountered in federated learning \cite{mcmahan2017communication}). By aggregating at the level of distributions via LogP, the framework is robust to the diversity of agent knowledge.

\section{Experiments}
\label{sec:experiments}
To provide an empirical proof-of-concept for the \texttt{LLMPrior} framework, we conduct a set of foundational experiments. These experiments are designed to test the core hypothesis: an LLM can translate natural language context into a valid, meaningful prior distribution, and that this prior behaves correctly within a standard Bayesian workflow.

\subsection{Experimental Design}

\paragraph{Task 1: Eliciting a prior for a binomial proportion.}
The goal is to estimate the bias $\theta \in [0, 1]$ of a coin, a classic problem in Bayesian inference. The conjugate prior for the Binomial likelihood is the Beta distribution, $p(\theta|a,b) \propto \theta^{a-1}(1-\theta)^{b-1}$. 
The task for the \texttt{LLMPrior} is to generate the hyperparameters $(a,b)$ from natural language contexts $C$, i.e. $p(\theta|a,b)=Beta(\theta | \mathcal{L}(C))$. Since the Beta distribution is the conjugate prior for the Binomial likelihood, the posterior distribution is also a Beta distribution. Given a prior $\text{Beta}(a, b)$ and observing $k$ successes (heads) in $n$ trials (flips), the posterior parameters are updated analytically to $a' = a+k$ and $b' = b + (n-k)$.

We designed three distinct contexts to represent different belief states:
\begin{itemize}
    \item \textit{Uninformative:} \texttt{"I have no information about the coin. Any bias is equally likely."} (Expected: $a \approx 1, b \approx 1$)
    \item \textit{Fair:} \texttt{"The coin is believed to be fair. I am quite confident in this belief."} (Expected: $a=b \gg 1$)
    \item \textit{Biased:} \texttt{"The coin is strongly biased towards heads. I am very certain it is not fair and lands heads most of the time."} (Expected: $a > b$, with $a$ significantly larger than $b$)
\end{itemize}

The prompt used is (with $\texttt{context\_text}$ replaced by each of the above 3 contexts):
\begin{verbatim}
"""
You are an expert statistician. Your task is to elicit a prior 
distribution for the bias of a coin, theta, which lies in the range 
[0, 1]. The appropriate prior is a Beta(a, b) distribution. Based on 
the following context, determine the most appropriate hyperparameters 
'a' and 'b'. Your response MUST be a valid JSON object with two keys: 
"a" and "b", with positive numerical values. Do not include any other 
text, explanations, or markdown formatting.

CONTEXT: "{context_text}"
"""
\end{verbatim}

For implementation, a \textit{Gemini} model \cite{comanici2025gemini25pushingfrontier} (version 2.5) was prompted to output a JSON object with keys `a` and `b`. For validation, we perform a Bayesian update after observing hypothetical data of 8 heads and 2 tails (an observed heads proportion of 0.8) and analyze the resulting posterior.

\paragraph{Task 2: Aggregation of conflicting priors.}
To validate the aggregation mechanism, we extend the coin-flipping task to a two-agent scenario. Agent 1 will receive a context suggesting a slight bias towards heads: 
\texttt{"I have a weak suspicion that the coin might be slightly biased towards heads, but I am not very certain."},
while Agent 2 receives a context suggesting a slight bias towards tails: 
\texttt{"From what I recall, the coin seems to have a small tendency to land on tails more often, though my confidence is low."}, the remaining prompt used is the same as Task 1.
Each agent generates a Beta prior, $p_1(\theta|a_1, b_1)=\text{Beta}(\theta | \mathcal{L}(C_1))$ and $p_2(\theta|a_2, b_2)=\text{Beta}(\theta | \mathcal{L}(C_2))$. The server will then aggregate them using a LogP with equal weights ($w_1=w_2=0.5$). The aggregated prior is analytically tractable: the LogP of two Beta distributions results in a new Beta distribution (see Appendix.\ref{app:aggregation_of_two_Betas_derivation} detailed derivation), $p_{\text{agg}}(\theta) \sim \text{Beta}(0.5(a_1+a_2), 0.5(b_1+b_2))$. We will verify that the aggregation of two opposing, weakly-held beliefs results in a more uncertain, centered prior.

\paragraph{Task 3: Eliciting a bimodal prior with MDN-LLM.}
To demonstrate the framework's ability to capture more complex, multi-modal beliefs, our third task focuses on eliciting a Gaussian Mixture Model (GMM) prior. We use the classic problem of modeling the waiting times between eruptions of the Old Faithful geyser in Yellowstone National Park, which is known to exhibit bimodal behavior \cite{azzalini_look_1990}. The goal is to generate a two-component GMM prior for the waiting time $\theta \in \mathbb{R}^+$.

Following natural language context (prompt) is provided to the LLM to describe this bimodality:
\begin{verbatim}
"""
You are an expert statistician. Your task is to elicit a prior distribution 
for the waiting time of a geyser eruption. The appropriate prior is a 2-component 
Gaussian Mixture Model (GMM). Based on the following context, determine the most 
appropriate parameters for this GMM. The parameters are: the mixture weights (a list 
of 2 floats that sum to 1), the means (a list of 2 floats), and the standard 
deviations (a list of 2 positive floats).

Your response MUST be a valid JSON object with three keys: "weights", "means", 
and "std_devs". Do not include any other text, explanations, or markdown formatting.

CONTEXT: "The waiting time between eruptions of the Old Faithful geyser is 
known to be variable. Historical data suggests there are two distinct 
patterns: a shorter waiting time and a longer waiting time. The shorter 
waits seem to be centered around 55 minutes, while the longer ones are 
closer to 80 minutes. The shorter waits appear to be slightly less 
common than the longer waits. Both patterns have a similar, moderate 
level of variability."
"""
\end{verbatim}

The LLM will be prompted to act as the core of an MDN, generating the parameters for a 2-component GMM: the mixture weights ($\alpha_1, \alpha_2$), the means ($\mu_1, \mu_2$), and the standard deviations ($\sigma_1, \sigma_2$). The expected output is a JSON object containing these parameters. Validation will consist of plotting the generated GMM distribution to visually confirm that it correctly represents the bimodal structure described in the context.

\subsection{Task 1 results: eliciting a prior for a binomial proportion}
The results from the first experiment strongly support the feasibility of our framework. The LLM successfully interpreted the semantic content of each context and generated appropriate hyperparameters for the Beta prior. The subsequent Bayesian updates demonstrate that these machine-generated priors behave coherently and predictably.

\begin{figure}[H]
    \centering
    \includegraphics[width=0.6\textwidth]{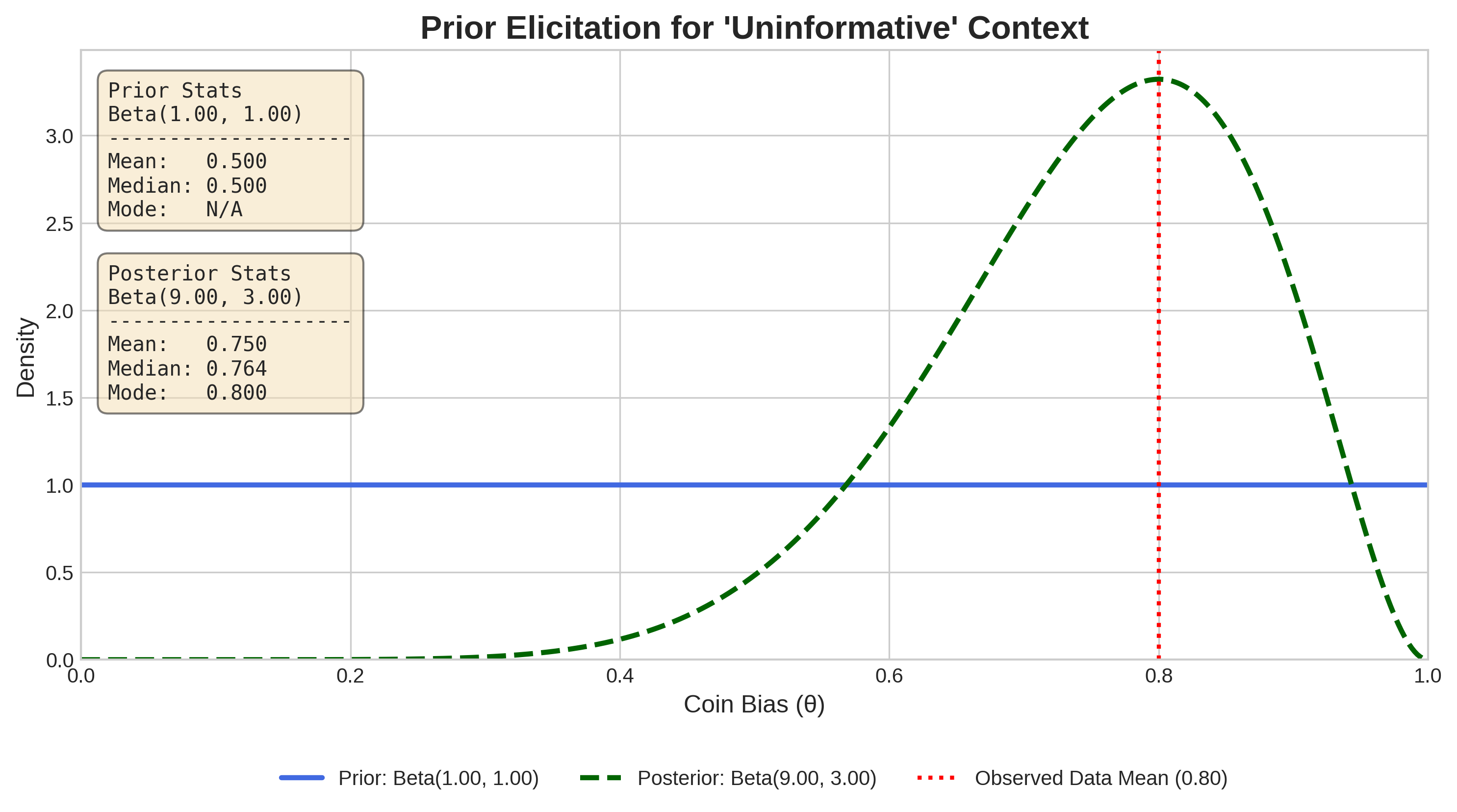}
    \caption{Result for the 'Uninformative' context. The LLM correctly generates a uniform Beta(1,1) prior. The posterior is dominated by the data, with its mode at 0.8.}
    \label{fig:uninformative}
\end{figure}

\paragraph{Uninformative context.} As shown in Fig.\ref{fig:uninformative}, when presented with the context \textit{'I have no information about the coin. Any bias is equally likely.'}, the LLM generated parameters $a=1.00$ and $b=1.00$. This corresponds to a Beta(1,1) distribution, which is a uniform distribution over the interval [0, 1]. This is the correct and standard choice for an uninformative prior in this setting, demonstrating the LLM's ability to map the concept of 'no information' to the correct mathematical form. After observing 8 heads and 2 tails, the posterior becomes Beta(9.00, 3.00). The posterior's mode is 0.800, exactly matching the empirical proportion of heads from the data. This illustrates a general principle: when the prior is uninformative, the posterior belief is shaped entirely by the observed data (i.e. dominated by likelihood).

\begin{figure}[H]
    \centering
    \includegraphics[width=0.6\textwidth]{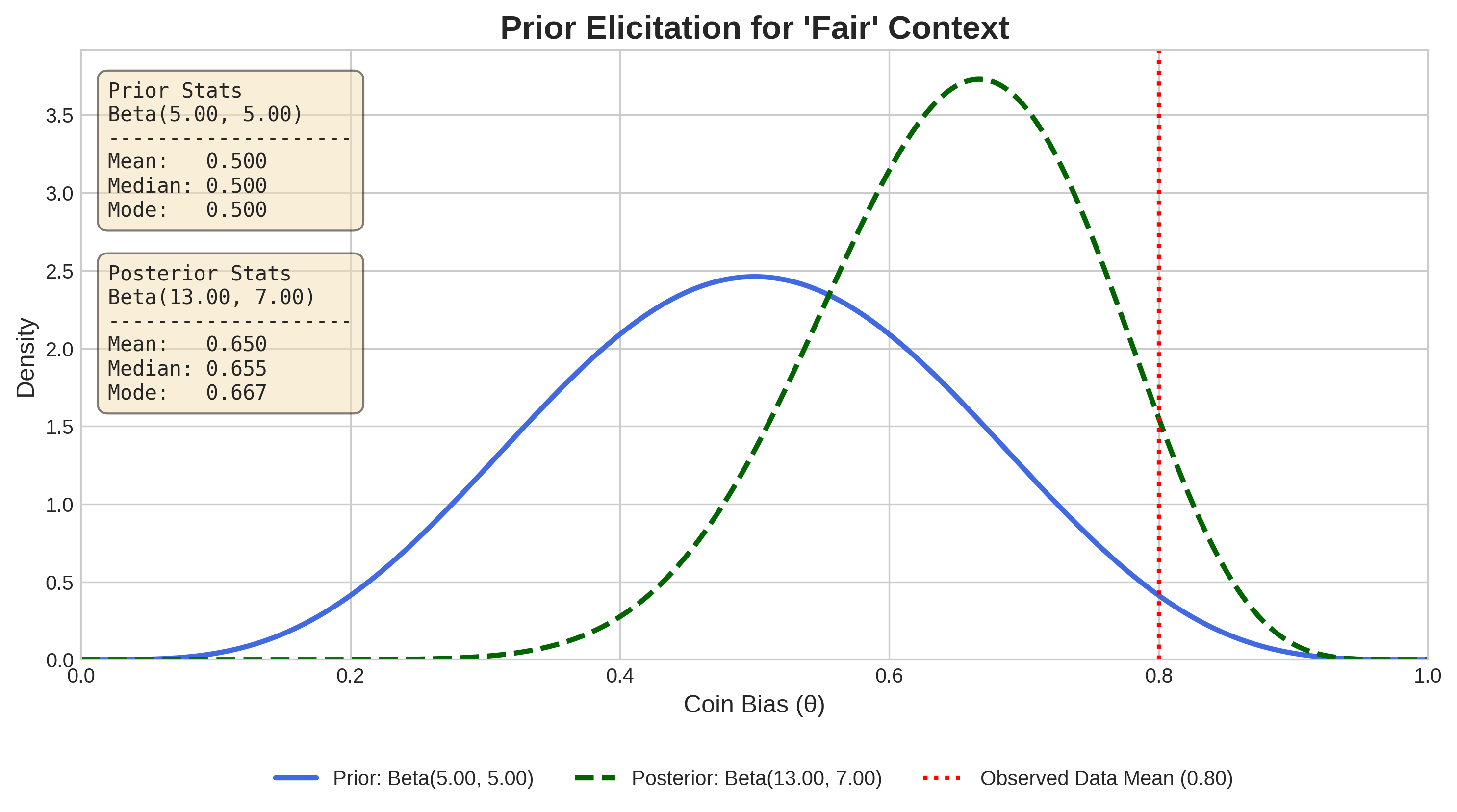}
    \caption{Result for the 'Fair' context. The prior is a Beta(5.00, 5.00), centered at 0.5. The posterior is pulled towards the data, but the prior's influence keeps the posterior mode (0.667) well below the data's mean (0.8).}
    \label{fig:fair}
\end{figure}

\paragraph{Fair context.} For the context \textit{'The coin is believed to be fair. I am quite confident in this belief.'}, the LLM produced a Beta(5.00, 5.00) prior (Fig.\ref{fig:fair}). This is a symmetric, unimodal distribution centered at 0.5, which correctly captures the belief in a fair coin. The parameters $a=b=5$ represent a moderate degree of confidence in this belief (equivalent to having previously observed 4 heads and 4 tails). The resulting posterior is Beta(13.00, 7.00). We observe the interplay between prior and data: the posterior mode shifts to 0.667, moving significantly towards the data's mean of 0.8, but the prior's 'gravity' prevents the posterior from fully converging to the data. This demonstrates the regularizing effect of an informative prior.

\begin{figure}[H]
    \centering
    \includegraphics[width=0.6\textwidth]{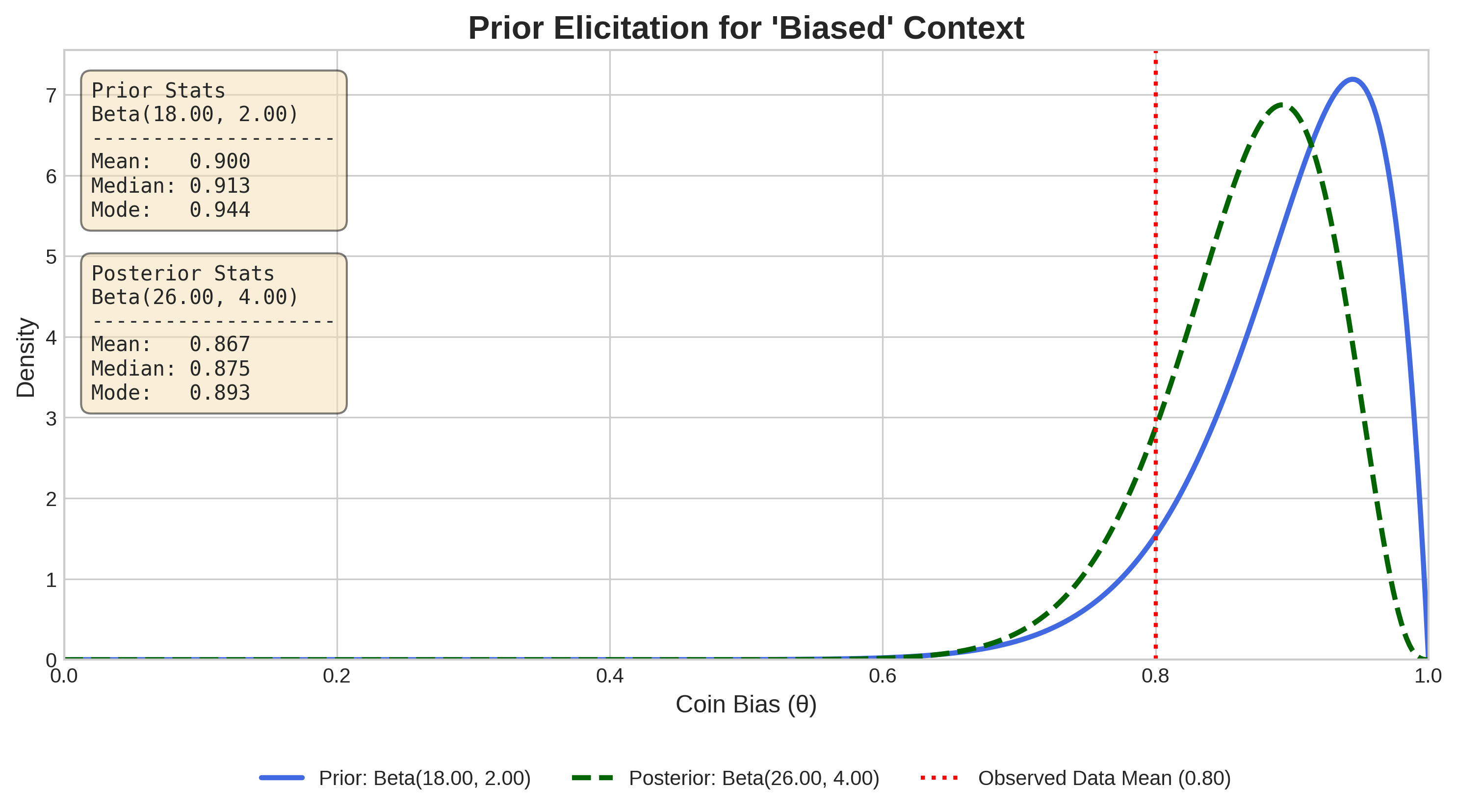}
    \caption{Result for the 'Biased' context. The LLM generates a strong prior, Beta(18.00, 2.00), reflecting the belief in a bias towards heads. The posterior, Beta(26.00, 4.00), is only slightly shifted by the data, as the prior and data are in agreement.}
    \label{fig:biased}
\end{figure}

\paragraph{Biased context.} Given the context \textit{'The coin is strongly biased towards heads. I am very certain it is not fair and lands heads most of the time.'}, the LLM generated a Beta(18.00, 2.00) prior (Fig.\ref{fig:biased}). This is a strong prior with a mode at 0.944, accurately reflecting the specified belief. The large parameters indicate high confidence. In this case, the prior belief is highly consistent with the observed data (8 heads, 2 tails). The resulting posterior, Beta(26.00, 4.00), has a mode at 0.893. The posterior is very similar to the prior, as the new data serves to confirm the existing strong belief rather than challenge it. The small shift in the mode reflects a minor update based on the empirical evidence from data (i.e. contribution from likelihood).

Collectively, these results provide a successful proof-of-concept. They show that the \texttt{LLMPrior} operator can correctly interpret nuanced natural language, generate corresponding valid prior distributions, and that these priors function as expected within a Bayesian update, exhibiting weak or strong influence on the posterior based on their initial strength and their agreement with the data.

\subsection{Task 2 results: aggregating conflicting priors}
To test the aggregation mechanism, we executed the second task, where two agents with conflicting, weakly-held beliefs generated priors that were then aggregated using LogP. The results, shown in Fig.\ref{fig:aggregation}, demonstrate that the framework can successfully synthesize opposing views into a single, coherent consensus distribution.

\begin{figure}[H]
    \centering
    \includegraphics[width=0.6\textwidth]{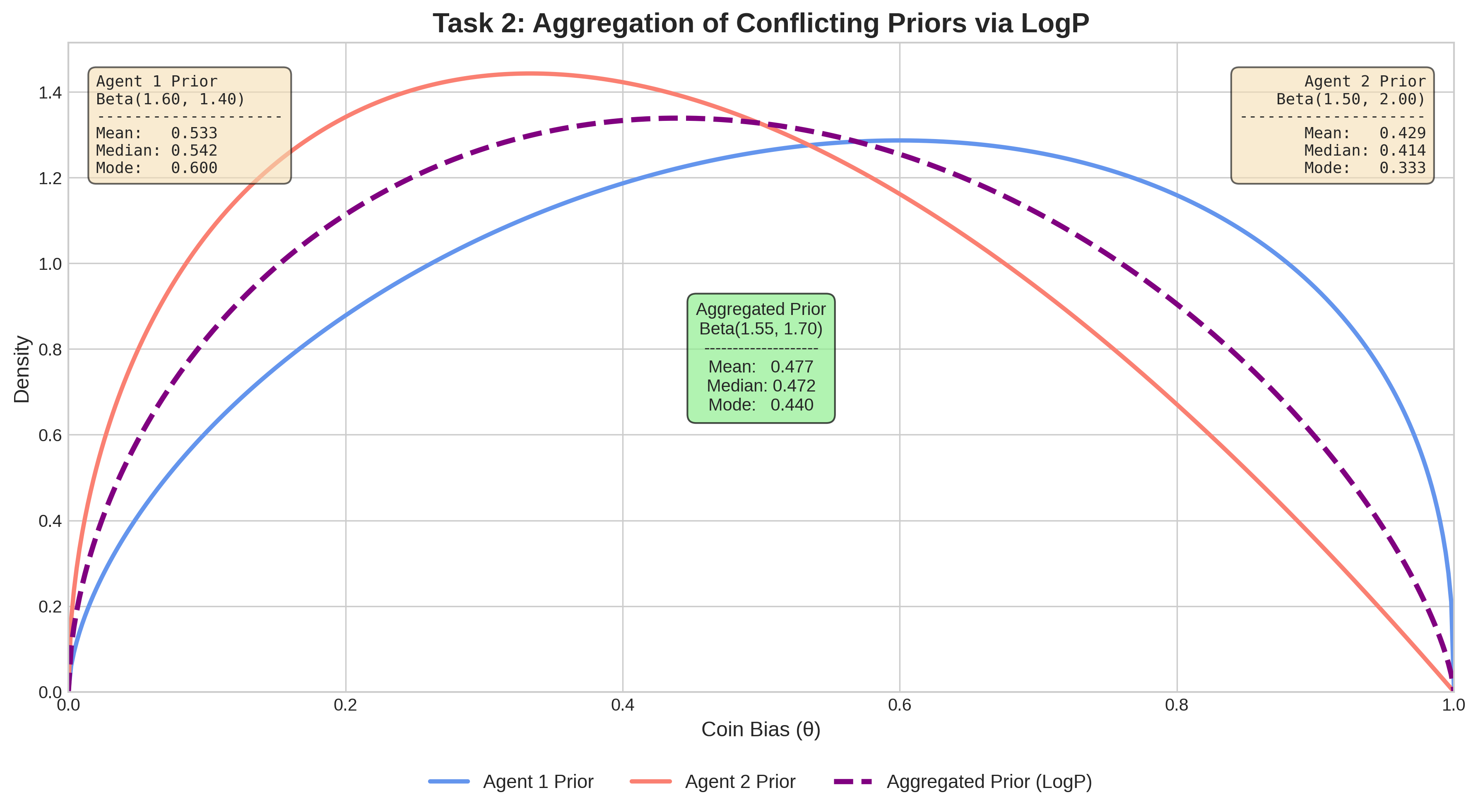}
    \caption{Result for Task 2. Agent 1 (blue) and Agent 2 (brown) have conflicting, weak priors. The LogP aggregation (purple, dashed) produces a single, more centered prior that reflects the combined uncertainty of both agents.}
    \label{fig:aggregation}
\end{figure}

\paragraph{Analysis of Aggregation.}
Agent 1, given the context of a '\textit{weak suspicion}' of a heads bias, produced a Beta(1.60, 1.40) prior. This is a unimodal distribution with a mode at 0.600 and a mean of 0.533, correctly capturing a slight belief in favor of heads. Agent 2, with a context suggesting a '\textit{small tendency to land on tails}' with low confidence, produced a Beta(1.50, 2.00) prior. This is also a unimodal distribution, but it is skewed towards tails with a mode at 0.333 and a mean of 0.429, accurately reflecting its context.

The Logarithmic Opinion Pool, with equal weights, aggregated these two distributions into a final Beta(1.55, 1.70) prior. This aggregated prior has a mean of 0.477 and a mode of 0.440. This outcome is highly significant for two reasons. First, the aggregation of two opposing beliefs did not result in an incoherent or overly broad distribution, but rather a single, rational consensus belief that is centered between the two initial priors. Second, the resulting prior is slightly broader (more uncertain) than the two individual priors, which correctly reflects the logical outcome of combining two weak, conflicting pieces of evidence: the rational consensus is a state of compromise that acknowledges the uncertainty from both sources. This experiment provides a strong proof-of-concept for the LogP's ability to robustly and coherently aggregate heterogeneous priors within the \texttt{LLMPrior} framework.

\subsection{Task 3 results: eliciting a bimodal GMM prior}
The third experiment tested the framework's ability to move beyond simple unimodal distributions and elicit a more complex, bimodal prior using the MDN-LLM architecture. The results, shown in Fig.\ref{fig:gmm_prior}, are a compelling demonstration of the operator's capacity to translate nuanced, qualitative descriptions into a correctly structured Gaussian Mixture Model.

\begin{figure}[H]
    \centering
    \includegraphics[width=0.6\textwidth]{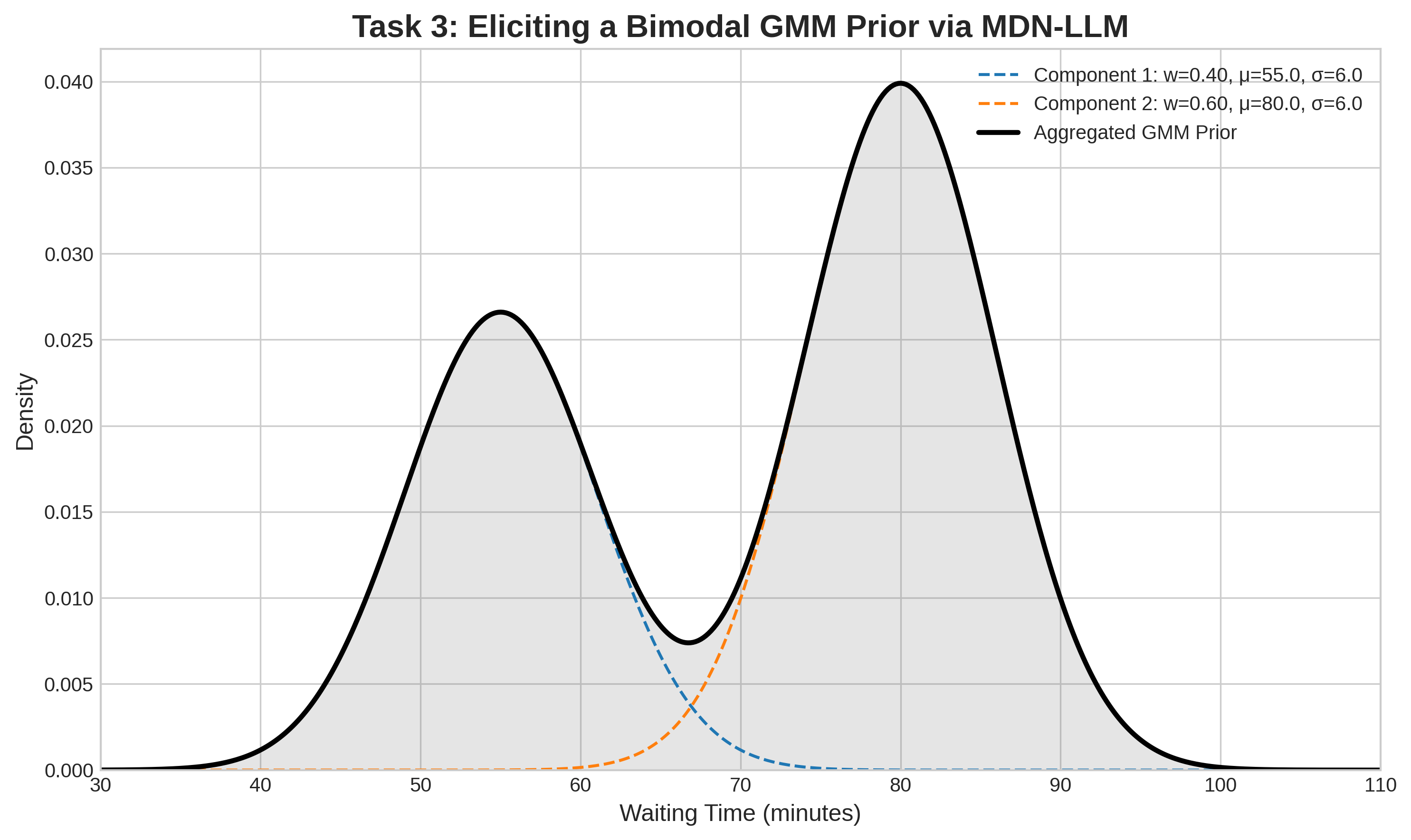}
    \caption{Result for Task 3. The LLM successfully generates a bimodal GMM prior that accurately reflects the natural language description of the Old Faithful geyser data, with correctly placed means and appropriate weighting.}
    \label{fig:gmm_prior}
\end{figure}

\paragraph{Analysis of GMM elicitation.}
Presented with a detailed context describing the bimodal nature of the Old Faithful geyser's eruption waiting times, the LLM was prompted to generate the parameters for a two-component GMM. The model's output was remarkably accurate:
\begin{itemize}
    \item \textbf{Means:} the context mentioned "shorter waits... centered around 55 minutes" and "longer ones are closer to 80 minutes." The LLM generated means of $\mu_1=55.0$ and $\mu_2=80.0$, perfectly matching the specified locations of the modes.
    \item \textbf{Mixture weights:} the context stated that "shorter waits appear to be slightly less common than the longer waits." The LLM interpreted this by assigning a weight of $\alpha_1=0.4$ to the first component and $\alpha_2=0.6$ to the second, correctly capturing the described asymmetry in the mixture.
    \item \textbf{Standard deviations:} in response to the phrase "Both patterns have a similar, moderate level of variability," the LLM assigned an equal standard deviation of $\sigma_1=\sigma_2=6.0$ to both components, a reasonable quantitative interpretation of the context.
\end{itemize}

As visualized in Fig.\ref{fig:gmm_prior}, these parameters combine to form a bimodal distribution that is an excellent representation of the prior belief described in the text. This result successfully demonstrates that the \texttt{LLMPrior} framework, and specifically the MDN-LLM architecture, is not limited to simple conjugate families but can effectively elicit more complex, multi-modal priors, significantly broadening its applicability to real-world modeling problems.

\section{Discussion}
\label{sec:discussion}
The successful execution of our experiments provides a strong proof-of-concept, yet it also illuminates some practical challenges and promising directions for future research.

\paragraph{Computational costs and scalability.}
The reliance on LLMs for prior generation incurs computational costs, but a more significant challenge for scalability lies in the aggregation step. As we move from simple Beta priors (Tasks 1 \& 2) to more expressive GMM priors (Task 3), the aggregation complexity becomes paramount. Our framework specifies using a LogP, but the product of $N$ GMMs with $K$ components each results in a new mixture with $K^N$ components. This combinatorial explosion makes exact aggregation intractable. The development of efficient and accurate approximation schemes, whether through variational methods, moment matching, or sophisticated sampling, is the most critical research direction for applying this framework to large-scale, multi-agent systems using complex priors.

\paragraph{Robustness to LLM artifacts.}
The framework's output quality relies on the LLM's behavior. Our experiments successfully used carefully engineered prompts that demanded a strict JSON output, which helps mitigate the risk of free-form "hallucinations" \cite{farquhar_detecting_2024,huang_survey_2025}. However, the risk of the LLM inheriting biases from its training data or generating nonsensical parameters remains. Future work could focus on creating robust validation layers to "sanity-check" generated parameters against physical or logical constraints (e.g. ensuring a variance is non-negative, or a mean falls within a plausible range).

\paragraph{Quantifying LLM-induced uncertainty.}
Our experiments generated point estimates of the prior parameters (e.g. Beta(5.00, 5.00)). This, however, ignores the uncertainty in the LLM's generation process itself. A more complete Bayesian treatment would capture this model uncertainty. A promising direction is to develop a hierarchical model where the LLM generates a distribution over prior parameters (a hyper-prior), for example by prompting it multiple times with slight variations \footnote{During our experiments, we found multiple calls, even with the same prompt, to the same LLM model can generate semantically consistent but slightly different feedbacks.} or a non-zero temperature setting. This would allow the final inference to marginalize over the LLM's own uncertainty, leading to more robust conclusions.

\paragraph{Sensitivity to input prompts.}
The success of our experiments relied on specific, un-ambiguous prompts. As noted in the methodology, the mapping from context $C$ to parameters $\phi$ must be sensitive and meaningful. While our results show this is achievable, it also highlights a potential vulnerability: the framework's performance is tied to the quality of prompt engineering. Future research could investigate methods for automatically optimizing prompts or developing conversational elicitation processes where the LLM can ask clarifying questions to resolve ambiguity before generating the final prior.

\paragraph{Integration with probabilistic programming languages.}
A key strength of this framework is its modularity, allowing the \texttt{LLMPrior} to be flexibly integrated into existing Probabilistic Programming Languages (PPLs). The operator is not intended to replace PPLs, but to augment them. A practitioner could define a model in a language such as \textit{Turing.jl} \cite{Fjelde2024Turing}, \textit{RxInfer.jl} \cite{Bagaev2023}, \textit{PyMC} \cite{salvatier_probabilistic_2015}, \textit{Pyro} \cite{bingham_pyro_2018}, \textit{PINTS} \cite{clerx_probabilistic_2019} or \textit{Stan} \cite{carpenter_stan_2017}, and call the \texttt{LLMPrior} operator within the model definition to specify a prior for a parameter. For example, a line in a probabilistic programme could read \footnote{The author thanks Albert Podusenko, whose post and insights on this topic inspired this line of thought.}: \texttt{$\theta \sim$ LLMPrior("context about $\theta$")}. In practice, this would issue a one-time call to the LLM to fetch the prior's parameters, which are then fixed. The PPL's inference engine would then proceed as usual, using the generated distribution for sampling. This "plug-and-play" capability significantly lowers the barrier to specifying complex, context-informed priors within established Bayesian modeling workflows.

\section{Conclusion}
\label{sec:conclusion}
We present a novel framework, \texttt{LLMPrior}, for automating and scaling the elicitation and aggregation of prior distributions using Large Language Models. We introduced a principled operator, $\mathcal{L}$, which translates unstructured context into a valid, tractable PDF by architecturally coupling an LLM with an explicit generative model. Our experiments successfully demonstrated this capability, showing that the framework can elicit both simple conjugate priors (Beta distributions) and more complex, multi-modal priors (Gaussian Mixture Models) that accurately reflect the semantic content of the provided context.

Furthermore, we extended this concept to multi-agent systems, advocating for the theoretically-grounded Logarithmic Opinion Pool to aggregate distributed beliefs. Our second experiment provided a proof-of-concept for this aggregation, showing that the LogP can coherently synthesize conflicting beliefs into a rational consensus. The proposed \texttt{Fed-LLMPrior} algorithm provides a concrete blueprint for implementing these ideas in a federated setting. While challenges in scalability, robustness, and uncertainty quantification remain, this work represents an important step towards a new paradigm of knowledge-driven Bayesian modeling, with the potential to dramatically lower the barrier to entry and expand the reach of Bayesian methods.

\section*{Code Availability}
The code used in this work is available at: \url{https://github.com/YongchaoHuang/llm_prior}

\bibliography{references}

\begin{thebibliography}{10}

\bibitem{akhauri_performance_2025}
Yash Akhauri, Bryan Lewandowski, Cheng-Hsi Lin, Adrian~N. Reyes, Grant~C. Forbes, Arissa Wongpanich, Bangding Yang, Mohamed~S. Abdelfattah, Sagi Perel, and Xingyou Song.
\newblock Performance {Prediction} for {Large} {Systems} via {Text}-to-{Text} {Regression}, June 2025.
\newblock arXiv:2506.21718 [cs].

\bibitem{azzalini_look_1990}
A.~Azzalini and A.~W. Bowman.
\newblock A {Look} at {Some} {Data} on the {Old} {Faithful} {Geyser}.
\newblock {\em Journal of the Royal Statistical Society. Series C (Applied Statistics)}, 39(3):357--365, 1990.
\newblock Publisher: [Royal Statistical Society, Oxford University Press].

\bibitem{Bagaev2023}
Dmitry Bagaev, Albert Podusenko, and Bert de~Vries.
\newblock Rxinfer: A julia package for reactive real-time bayesian inference.
\newblock {\em Journal of Open Source Software}, 8(84):5161, 2023.

\bibitem{berg_sylvester_2019}
Rianne van~den Berg, Leonard Hasenclever, Jakub~M. Tomczak, and Max Welling.
\newblock Sylvester {Normalizing} {Flows} for {Variational} {Inference}, February 2019.
\newblock arXiv:1803.05649 [stat].

\bibitem{bingham_pyro_2018}
Eli Bingham, Jonathan~P. Chen, Martin Jankowiak, Fritz Obermeyer, Neeraj Pradhan, Theofanis Karaletsos, Rohit Singh, Paul Szerlip, Paul Horsfall, and Noah~D. Goodman.
\newblock Pyro: {Deep} {Universal} {Probabilistic} {Programming}, October 2018.
\newblock arXiv:1810.09538 [cs].

\bibitem{bishop1994mixture}
Christopher~M. Bishop.
\newblock Mixture density networks.
\newblock Technical Report NCRG/94/004, Neural Computing Research Group, Aston University, February 1994.

\bibitem{blei_variational_2017}
David~M. Blei, Alp Kucukelbir, and Jon~D. McAuliffe.
\newblock Variational {Inference}: {A} {Review} for {Statisticians}.
\newblock {\em Journal of the American Statistical Association}, 112(518):859--877, April 2017.
\newblock Publisher: ASA Website \_eprint: https://doi.org/10.1080/01621459.2017.1285773.

\bibitem{bordley_multiplicative_1982}
Robert~F. Bordley.
\newblock A {Multiplicative} {Formula} for {Aggregating} {Probability} {Assessments}.
\newblock {\em Management Science}, 28(10):1137--1148, 1982.
\newblock Publisher: INFORMS.

\bibitem{carpenter_stan_2017}
Bob Carpenter, Andrew Gelman, Matthew~D. Hoffman, Daniel Lee, Ben Goodrich, Michael Betancourt, Marcus Brubaker, Jiqiang Guo, Peter Li, and Allen Riddell.
\newblock Stan: {A} {Probabilistic} {Programming} {Language}.
\newblock {\em Journal of Statistical Software}, 76:1--32, January 2017.

\bibitem{carvalho_consensual_2013}
Arthur Carvalho and Kate Larson.
\newblock A consensual linear opinion pool.
\newblock In {\em Proceedings of the {Twenty}-{Third} international joint conference on {Artificial} {Intelligence}}, {IJCAI} '13, pages 2518--2524, Beijing, China, August 2013. AAAI Press.

\bibitem{carvalho2023bayesian}
Luiz~M. Carvalho, Daniel A.~M. Villela, Flavio~C. Coelho, and Leonardo~S. Bastos.
\newblock Bayesian inference for the weights in logarithmic pooling.
\newblock {\em Bayesian Analysis}, 18(1):223--251, 2023.

\bibitem{clerx_probabilistic_2019}
Michael Clerx, Martin Robinson, Ben Lambert, Chon~Lok Lei, Sanmitra Ghosh, Gary~R. Mirams, and David~J. Gavaghan.
\newblock Probabilistic {Inference} on {Noisy} {Time} {Series} ({PINTS}).
\newblock {\em Journal of Open Research Software}, 7(1), July 2019.

\bibitem{comanici2025gemini25pushingfrontier}
Gheorghe Comanici, Eric Bieber, and Mike~Schaekermann et~al.
\newblock Gemini 2.5: Pushing the frontier with advanced reasoning, multimodality, long context, and next generation agentic capabilities, 2025.

\bibitem{cooke_experts_1991}
Roger~M Cooke.
\newblock {\em Experts {In} {Uncertainty}: {Opinion} and {Subjective} {Probability} in {Science}}.
\newblock Oxford University Press, October 1991.

\bibitem{degroot_optimal_1991}
Morris~H. DeGroot and Julia Mortera.
\newblock Optimal {Linear} {Opinion} {Pools}.
\newblock {\em Management Science}, 37(5):546--558, 1991.
\newblock Publisher: INFORMS.

\bibitem{dinh2016density}
Laurent Dinh, Jascha Sohl-Dickstein, and Samy Bengio.
\newblock Density estimation using {Real} {NVP}, February 2017.
\newblock arXiv:1605.08803 [cs].

\bibitem{domke2025large}
Justin Domke.
\newblock Large {Language} {Bayes}, April 2025.
\newblock arXiv:2504.14025 [cs].

\bibitem{ek_federated_2021}
Sannara Ek, François Portet, Philippe Lalanda, and German Vega.
\newblock A {Federated} {Learning} {Aggregation} {Algorithm} for {Pervasive} {Computing}: {Evaluation} and {Comparison}.
\newblock In {\em 2021 {IEEE} {International} {Conference} on {Pervasive} {Computing} and {Communications} ({PerCom})}, pages 1--10, March 2021.
\newblock arXiv:2110.10223 [cs].

\bibitem{farquhar_detecting_2024}
Sebastian Farquhar, Jannik Kossen, Lorenz Kuhn, and Yarin Gal.
\newblock Detecting hallucinations in large language models using semantic entropy.
\newblock {\em Nature}, 630(8017):625--630, June 2024.
\newblock Publisher: Nature Publishing Group.

\bibitem{Fjelde2024Turing}
Tor~Erlend Fjelde, Kai Xu, David Widmann, Mohamed Tarek, Cameron Pfiffer, Martin Trapp, Seth~D. Axen, Xianda Sun, Markus Hauru, Penelope Yong, Will Tebbutt, Zoubin Ghahramani, and Hong Ge.
\newblock Turing.jl: a general-purpose probabilistic programming language.
\newblock {\em ACM Trans. Probab. Mach. Learn.}, February 2025.

\bibitem{gatmiry_learning_2025}
Khashayar Gatmiry, Jonathan Kelner, and Holden Lee.
\newblock Learning {Mixtures} of {Gaussians} {Using} {Diffusion} {Models}, March 2025.
\newblock arXiv:2404.18869 [cs].

\bibitem{genest1984characterization}
Christian Genest.
\newblock A characterization theorem for externally bayesian groups.
\newblock {\em The Annals of Statistics}, 12(3):1100--1105, 1984.

\bibitem{genest1984aggregating}
Christian Genest, Samaradasa Weerahandi, and James~V. Zidek.
\newblock Aggregating opinions through logarithmic pooling.
\newblock {\em Theory and Decision}, 17:61--70, 1984.

\bibitem{genest1986combining}
Christian Genest and James~V. Zidek.
\newblock Combining {Probability} {Distributions}: {A} {Critique} and an {Annotated} {Bibliography}.
\newblock {\em Statistical Science}, 1(1):114--135, 1986.
\newblock Publisher: Institute of Mathematical Statistics.

\bibitem{goodfellow2016deep}
Ian Goodfellow, Yoshua Bengio, and Aaron Courville.
\newblock {\em Deep Learning}.
\newblock MIT Press, 2016.
\newblock \url{http://www.deeplearningbook.org}.

\bibitem{gugulothu_mixture_2024}
Narendhar Gugulothu, Sanjay~P. Bhat, and Tejas Bodas.
\newblock Mixture {Density} {Networks} for {Classification} with an {Application} to {Product} {Bundling}, February 2024.
\newblock arXiv:2402.05428 [cs].

\bibitem{heskes1998selecting}
Tom Heskes.
\newblock Selecting weighting factors in logarithmic opinion pools.
\newblock In {\em Proceedings of the 1997 conference on {Advances} in neural information processing systems 10}, {NIPS} '97, pages 266--272, Cambridge, MA, USA, July 1998. MIT Press.

\bibitem{huang_survey_2025}
Lei Huang, Weijiang Yu, Weitao Ma, Weihong Zhong, Zhangyin Feng, Haotian Wang, Qianglong Chen, Weihua Peng, Xiaocheng Feng, Bing Qin, and Ting Liu.
\newblock A {Survey} on {Hallucination} in {Large} {Language} {Models}: {Principles}, {Taxonomy}, {Challenges}, and {Open} {Questions}.
\newblock {\em ACM Transactions on Information Systems}, 43(2):1--55, March 2025.
\newblock arXiv:2311.05232 [cs].

\bibitem{kadane_interactive_1980}
Joseph~B. Kadane, James~M. Dickey, Robert~L. Winkler, Wayne~S. Smith, and Stephen~C. Peters.
\newblock Interactive {Elicitation} of {Opinion} for a {Normal} {Linear} {Model}.
\newblock {\em Journal of the American Statistical Association}, 75(372):845--854, 1980.
\newblock Publisher: [American Statistical Association, Taylor \& Francis, Ltd.].

\bibitem{li_mixture_1999}
Jonathan~Q. Li and Andrew~R. Barron.
\newblock Mixture density estimation.
\newblock In {\em Proceedings of the 13th {International} {Conference} on {Neural} {Information} {Processing} {Systems}}, {NIPS}'99, pages 279--285, Cambridge, MA, USA, November 1999. MIT Press.

\bibitem{madansky1964externally}
Albert Madansky.
\newblock Externally bayesian groups.
\newblock Technical Report AD0608611, RAND Corporation, Santa Monica, CA, November 1964.

\bibitem{makhija_bayesian_2024}
Disha Makhija, Joydeep Ghosh, and Nhat Ho.
\newblock A {Bayesian} {Approach} for {Personalized} {Federated} {Learning} in {Heterogeneous} {Settings}.
\newblock {\em Advances in Neural Information Processing Systems}, 37:102428--102455, December 2024.

\bibitem{mcmahan2017communication}
H.~Brendan McMahan, Eider Moore, Daniel Ramage, Seth Hampson, and Blaise Agüera~y Arcas.
\newblock Communication-{Efficient} {Learning} of {Deep} {Networks} from {Decentralized} {Data}, January 2023.
\newblock arXiv:1602.05629 [cs].

\bibitem{neal_slice_2003}
Radford~M. Neal.
\newblock Slice sampling.
\newblock {\em The Annals of Statistics}, 31(3):705--767, June 2003.
\newblock Publisher: Institute of Mathematical Statistics.

\bibitem{norets_approximation_2010}
Andriy Norets.
\newblock Approximation of conditional densities by smooth mixtures of regressions.
\newblock {\em The Annals of Statistics}, 38(3), June 2010.
\newblock Publisher: Institute of Mathematical Statistics.

\bibitem{ohagan2006uncertain}
Anthony O'Hagan, Caitlin~E. Buck, Alireza Daneshkhah, J.~Richard Eiser, Paul~H. Garthwaite, David~J. Jenkinson, Jeremy~E. Oakley, and Tim Rakow.
\newblock {\em Uncertain Judgements: Eliciting Experts' Probabilities}.
\newblock John Wiley \& Sons, Ltd, 2006.

\bibitem{papamakarios_normalizing_2021}
George Papamakarios, Eric Nalisnick, Danilo~Jimenez Rezende, Shakir Mohamed, and Balaji Lakshminarayanan.
\newblock Normalizing flows for probabilistic modeling and inference.
\newblock {\em J. Mach. Learn. Res.}, 22(1):57:2617--57:2680, January 2021.

\bibitem{pavlin_multi-agent_2010}
Gregor Pavlin, Patrick de~Oude, Marinus Maris, Jan Nunnink, and Thomas Hood.
\newblock A multi-agent systems approach to distributed bayesian information fusion.
\newblock {\em Information Fusion}, 11(3):267--282, July 2010.

\bibitem{raffel_exploring_2020}
Colin Raffel, Noam Shazeer, Adam Roberts, Katherine Lee, Sharan Narang, Michael Matena, Yanqi Zhou, Wei Li, and Peter~J. Liu.
\newblock Exploring the limits of transfer learning with a unified text-to-text transformer.
\newblock {\em J. Mach. Learn. Res.}, 21(1):140:5485--140:5551, January 2020.

\bibitem{rezende2015variational}
Danilo~Jimenez Rezende and Shakir Mohamed.
\newblock Variational inference with normalizing flows.
\newblock In {\em Proceedings of the 32nd {International} {Conference} on {International} {Conference} on {Machine} {Learning} - {Volume} 37}, {ICML}'15, pages 1530--1538, Lille, France, July 2015. JMLR.org.

\bibitem{rezende_variational_2015}
Danilo~Jimenez Rezende and Shakir Mohamed.
\newblock Variational inference with normalizing flows.
\newblock In {\em Proceedings of the 32nd {International} {Conference} on {International} {Conference} on {Machine} {Learning} - {Volume} 37}, {ICML}'15, pages 1530--1538, Lille, France, July 2015. JMLR.org.

\bibitem{salvatier_probabilistic_2015}
John Salvatier, Thomas Wiecki, and Christopher Fonnesbeck.
\newblock Probabilistic {Programming} in {Python} using {PyMC}, July 2015.
\newblock arXiv:1507.08050 [stat].

\bibitem{sharlin_context_2025}
Samiha Sharlin and Tyler~R. Josephson.
\newblock In {Context} {Learning} and {Reasoning} for {Symbolic} {Regression} with {Large} {Language} {Models}, March 2025.
\newblock arXiv:2410.17448 [cs].

\bibitem{shojaee_llm-sr_2025}
Parshin Shojaee, Kazem Meidani, Shashank Gupta, Amir~Barati Farimani, and Chandan~K. Reddy.
\newblock {LLM}-{SR}: {Scientific} {Equation} {Discovery} via {Programming} with {Large} {Language} {Models}, March 2025.
\newblock arXiv:2404.18400 [cs].

\bibitem{stone1961opinion}
M.~Stone.
\newblock The {Opinion} {Pool}.
\newblock {\em The Annals of Mathematical Statistics}, 32(4):1339--1342, 1961.
\newblock Publisher: Institute of Mathematical Statistics.

\end{thebibliography}
\bibliographystyle{plain}

\newpage
\appendix

\section{Comparison} \label{app:comparison_tables}

\begin{table}[H]
\caption{Comparison of Architectural Blueprints for \texttt{LLMPrior}}
\label{tab:arch_comparison}
\centering
\begin{tabular}{p{2.5cm} p{3.5cm} p{3.5cm} p{3.5cm}}
\toprule
\textbf{Feature} & \textbf{MDN-LLM} & \textbf{NF-LLM} & \textbf{EBM-LLM} \\
\midrule
\textbf{Core Principle} & LLM generates parameters for a Gaussian Mixture Model (GMM). & LLM generates parameters for a sequence of invertible transformations. & LLM generates an unnormalized energy function or symbolic expression. \\
\textbf{Expressiveness} & High. Universal approximator of continuous densities \cite{bishop1994mixture}. & Very High. Can model complex topologies, dependent on flow type and depth \cite{rezende2015variational,papamakarios_normalizing_2021}. & Maximal. Not constrained by a specific parametric form. \\
\textbf{Guaranteed Validity} & Yes, by construction (with softmax on weights and Cholesky on covariances) \cite{gugulothu_mixture_2024}. & Yes, if generated parameters satisfy invertibility constraints \cite{rezende_variational_2015}. & No. Requires an intractable normalization constant ($Z$) to become a valid PDF \cite{goodfellow2016deep}. \\
\textbf{Density Eval.} & Tractable. Involves a simple weighted sum of Gaussian PDFs. & Tractable. Requires inverting the flow and computing Jacobian determinants \cite{berg_sylvester_2019}. & Intractable. Requires computing the partition function $Z$. \\
\textbf{Sampling} & Tractable. Hierarchical sampling: select component, then sample from Gaussian. & Tractable. A single forward pass through the flow from a base sample. & Difficult. Typically requires MCMC-based methods (e.g. Langevin). \\
\textbf{Stability} & Generally stable. Output structure is simple. Potentially mode collapse in GMM. & Can be unstable. Invertibility constraints can be hard to enforce on LLM output. & Depends entirely on the quality of the LLM's regression capabilities \cite{sharlin_context_2025,shojaee_llm-sr_2025}. \\
\textbf{Primary Use Case} & General-purpose, robust prior generation for standard Bayesian workflows. & Priors with highly complex, non-linear dependencies between variables. & Theoretical exploration; likelihood-free inference methods (e.g. ABC). \\
\bottomrule
\end{tabular}
\end{table}

\begin{table}[H]
\caption{Comparison of Prior Aggregation Strategies}
\label{tab:agg_comparison}
\centering
\begin{tabular}{p{3cm} p{3.5cm} p{3.5cm} p{3.5cm}}
\toprule
\textbf{Feature} & \textbf{Parameter Avg. (FedAvg-style)} & \textbf{Centralized LogP} & \textbf{Decentralized LogP (Message Passing)} \\
\midrule
\textbf{Core Principle} & Weighted arithmetic mean of the parameters of prior distributions, e.g. GMM parameters. & Weighted geometric mean of the prior densities, computed on a central server. & Iterative, local geometric mean of prior densities among neighboring agents. \\
\textbf{Theoretical Guarantees} & None. A heuristic that can fail badly with heterogeneous contexts \cite{mcmahan2017communication,ek_federated_2021}. & Externally Bayesian, KL-divergence optimal, preserves log-concavity \cite{carvalho2023bayesian}. & Same as centralized LogP, provided the network reaches a consensus. \\
\textbf{Communication Pattern} & One-shot, clients-to-server. & One-shot, clients-to-server. & Iterative, peer-to-peer among neighbors. \\
\textbf{Robustness (Non-IID)} & Poor. Assumes semantic correspondence of parameters, which is violated by heterogeneity \cite{mcmahan2017communication,ek_federated_2021}. & High. Aggregates at the distribution level, inherently robust to parameterization differences. & High. Local pooling and iterative refinement can lead to robust consensus. \\
\textbf{Comp. Cost (Server)} & Low (simple averaging). & High (requires approximating a product of mixtures). & N/A (no server). \\
\textbf{Comp. Cost (Agent)} & Low (generation only). & Low (generation only). & Medium (generation + iterative updates) \cite{pavlin_multi-agent_2010}. \\
\bottomrule
\end{tabular}
\end{table}

\section{Derivation of the Logarithmic Pool for two Beta Distributions}
\label{app:aggregation_of_two_Betas_derivation}

Here we provide a detailed derivation for the aggregation of two Beta distributions using the Logarithmic Opinion Pool (LogP) with equal weights.

We start with the LogP formula for two agents with equal weights ($w_1 = w_2 = 0.5$):
    \begin{equation*}
        p_{\text{agg}}(\theta) \propto p_1(\theta)^{0.5} \cdot p_2(\theta)^{0.5}
    \end{equation*}

In following derivation, we use the proportional form (kernel) of the Beta PDF \footnote{
The unnormalised density expression defines the shape of the aggregated distribution. It satisfies non-negativity but doesn't integrate to unity. To make it a valid PDF, we must introduce a normalizing constant, $C$, such that:
$p_{\text{agg}}(\theta) = C \cdot p_1(\theta)^{0.5} \cdot p_2(\theta)^{0.5}$
where the constant $C$ is calculated to ensure the total probability is 1:
$C = \frac{1}{\int p_1(\theta)^{0.5} \cdot p_2(\theta)^{0.5} \,d\theta}$.
}. The full PDF is $p(\theta|a,b) = \frac{\Gamma(a+b)}{\Gamma(a)\Gamma(b)} \theta^{a-1}(1-\theta)^{b-1}$. For finding the resulting distribution's family and parameters, we only need the kernel, which captures the dependence on $\theta$:
    \begin{align*}
        p_1(\theta | a_1, b_1) &\propto \theta^{a_1-1}(1-\theta)^{b_1-1} \\
        p_2(\theta | a_2, b_2) &\propto \theta^{a_2-1}(1-\theta)^{b_2-1}
    \end{align*}

Substitute the kernels into the LogP formula:
    \begin{equation*}
        p_{\text{agg}}(\theta) \propto \left( \theta^{a_1-1}(1-\theta)^{b_1-1} \right)^{0.5} \cdot \left( \theta^{a_2-1}(1-\theta)^{b_2-1} \right)^{0.5}
    \end{equation*}

Apply the power rule ($(x^m)^n = x^{mn}$) to each term:
    \begin{equation*}
        p_{\text{agg}}(\theta) \propto \left( \theta^{0.5(a_1-1)}(1-\theta)^{0.5(b_1-1)} \right) \cdot \left( \theta^{0.5(a_2-1)}(1-\theta)^{0.5(b_2-1)} \right)
    \end{equation*}

Combine the terms by adding the exponents ($x^m \cdot x^n = x^{m+n}$):
    \begin{equation*}
        p_{\text{agg}}(\theta) \propto \theta^{0.5(a_1-1) + 0.5(a_2-1)} \cdot (1-\theta)^{0.5(b_1-1) + 0.5(b_2-1)}
    \end{equation*}

Simplify the exponents:
    \begin{align*}
        p_{\text{agg}}(\theta) &\propto \theta^{0.5a_1 - 0.5 + 0.5a_2 - 0.5} \cdot (1-\theta)^{0.5b_1 - 0.5 + 0.5b_2 - 0.5} \\
        p_{\text{agg}}(\theta) &\propto \theta^{0.5(a_1+a_2) - 1} \cdot (1-\theta)^{0.5(b_1+b_2) - 1}
    \end{align*}

Recognize the resulting kernel is the kernel of a new Beta distribution, $\text{Beta}(a_{\text{agg}}, b_{\text{agg}})$, where the parameters are:
    \begin{align*}
        a_{\text{agg}} &= 0.5(a_1+a_2) \\
        b_{\text{agg}} &= 0.5(b_1+b_2)
    \end{align*}

Thus, the formula for the aggregated prior is $p_{\text{agg}}(\theta) \sim \text{Beta}(0.5(a_1+a_2), 0.5(b_1+b_2))$. This confirms that the Logarithmic Pool of Beta distributions is itself a Beta distribution.

\end{document}